\newcommand{\C}{{\mathcal C}}
\newcommand{\D}{{\mathcal D}}
\newcommand{\X}{{\mathcal X}}
\newcommand{\Y}{{\mathcal Y}}
\newcommand{\Z}{{\mathcal Z}}
\newcommand{\XH}{\hat\X}
\newcommand{\vars}{\operatorname{vars}}
\newcommand{\sol}{\operatorname{sol}}
\newcommand{\sizeof}[1]{|#1|}
\newcommand{\restrict}[2]{{#1}_{|#2}}
\newcommand{\propagate}{\operatorname{propagate}}
\newcommand{\pseudosubsubsec}[1]{\vspace{0.3em}\noindent\textbf{#1}.}
\def\CPP{\leavevmode\textrm{\hbox{C\hskip
-0.1ex\raise 0.5ex\hbox{\tiny ++}}}}
\newcommand{\ANDOR}[0]{And/Or}
\begin{document}
\pagestyle{plain}  

\title{Decomposition During Search for Propagation-Based Constraint Solvers}

\author{Martin Mann\inst{1}\fnmsep\thanks{Supported by the EU project EMBIO (EC contract
          number 012835)}
          \and Guido Tack\inst{2}
         \and Sebastian Will\inst{1}\fnmsep\thanks{Supported by the EU Network of Excellence
          REWERSE (project number 506779)}
         }
\institute{Bioinformatics, Albert-Ludwigs-University Freiburg, Germany \\
        \email{\{mmann,will\}@informatik.uni-freiburg.de}
        \and Programming Systems Lab, Saarland University, Germany\\
        \email{tack@ps.uni-sb.de}
        }

\maketitle

\begin{abstract} 
  We describe \emph{decomposition during search} (DDS), an integration of
  \ANDOR{} tree search into propagation-based constraint solvers. The
  presented search algorithm dynamically decomposes sub-problems of a
  constraint satisfaction problem into independent partial problems, avoiding
  redundant work.

  The paper discusses how DDS interacts with key features that make 
  propagation-based solvers successful: constraint propagation, especially for 
  global constraints, and dynamic search heuristics.

  We have implemented DDS for the Gecode constraint programming library. Two
  applications, solution counting in graph coloring and protein structure
  prediction, exemplify the benefits of DDS in practice.
  
   \end{abstract}

\section{Introduction}
\label{sec:intro}

Propagation-based constraint solvers owe much of their success to a
\emph{best-of-several-worlds} approach: They combine classic AI search methods
with advanced implementation techniques from the Programming Languages
community and efficient algorithms from Operations Research. Furthermore, the
CP community has developed a great number of propagation algorithms for global
constraints.

In this paper, we present how to integrate \ANDOR{} search into
propagation-based constraint solvers. We call the integration \emph{decomposition
during search} (DDS). We take full advantage of all the features mentioned above
that make propagation-based constraint solvers successful. The most interesting
points, and main contributions of our paper, are how DDS interacts with and
benefits from constraint propagation, especially in the presence of global
constraints, and dynamic search heuristics. We exemplify the profit of DDS by
exhaustive solution counting, an important application area of decomposing
search
strategies~\cite{Bayardo:_count_model_connec_compon:2000,Dechter:Mateescu:impact:CP2004}.

\pseudosubsubsec{Related work}
Only recently, counting and exhaustive enumeration of solutions of a constraint
satisfaction problem (CSP) gained a lot of
interest~\cite{Angelsmark:Jonsson:_improv_algor_count_solut:CP2003,Bayardo:_count_model_connec_compon:2000,Dechter:Mateescu:impact:CP2004,Roth:_hardness_approx_reasoning:AI1996}.
In general, the counting of CSP solutions is in the complexity class \#P, i.e.\ it
is even harder than deciding
satisfiability~\cite{Pesant:_count_solut_csps:IJCAI2005}. This class was defined
by Valiant~\cite{Valiant:TCS1979} as the class of counting problems that can be
computed in nondeterministic polynomial time. Notwithstanding the complexity,
there is demand for solution counting in real applications. For instance, in
bioinformatics counting optimal protein structures is of high importance for the
study of protein energy landscapes, kinetics, and protein
evolution~\cite{fitnesslandscapes,Wolfinger:etal:_explor:EPL2006} and can be
done using CP~{\cite{Backofen_Will_Constraints2006}}.

Already folklore, standard solving me\-thods for CSPs like Depth-First Search
(DFS) leave room for saving redundant work, in particular when counting
\emph{all} solutions~\cite{Freuder:Quinn:IJCAI1985}. Recent work by Dechter et
al.~\cite{Dechter:Mateescu:impact:CP2004,Dechter:IJCAI:2005} introduced
\ANDOR{} search for solution counting and optimization, which makes use
of repeated \emph{and}-decomposi\-tion during the search following a
pseudo-tree. Their work thoroughly studies and develops a rich theory of
\ANDOR{} trees.

While not in the context of general constraint propagation, similar ideas were
discussed before for
SAT-solving~\cite{Bayardo:_count_model_connec_compon:2000,Biere:Sinz:JSBMC2006,Li:Beek:SAT-decomp:2004}.
The SAT approaches also introduce the idea of analyzing the induced dependency
structure dynamically during the search. This avoids redundancy that occurs due
to the emergence of independent connected components in the dependency graph
during the search.

\pseudosubsubsec{Motivation and contribution}
The motivation for this paper is to tackle the same kind of redundancy for
solving very hard real world problems, such as the counting of protein
structures, that require a full-fledged constraint programming system. This
requests for a method which is tailored for integration into modern CP systems
and directly supports features such as global constraints and dynamic search
heuristics. To make use of the statically unpredictable effects of constraint
propagation and entailment, the presented method avoids redundant search
dynamically.

Our main contribution is to present how to integrate \ANDOR{} tree search into
a state-of-the-art, propagation-based constraint solver. This is exemplified
by extending the Gecode system~\cite{GecodeWebsite}. We describe
\emph{decomposition during search} (DDS) on different levels of abstraction,
down to concrete implementation details.

In detail, we stress the impact that constraint propagation has on
decomposability of the constraint graph, and how DDS interacts (and works
seamlessly together) with propagators for global constraints, the workhorses
of modern propagation-based solvers. We show that global constraint
decomposition is the key to enable the application of DDS, and discuss
techniques that enable global constraint decomposition. The practical value of
DDS in the presence of global constraints is shown empirically, using a well
integrated and competitive implementation for the Gecode library.

\pseudosubsubsec{Overview}
The paper starts with a presentation of the notations and concepts that are
used throughout the later sections. In Sec.~\ref{sec:dds}, we briefly
recapitulate \ANDOR{} search, and then present, on a high level of
abstraction, \emph{decomposition during search} (DDS), our integration of
\ANDOR{} search into a propagation-based constraint solver.
Sec.~\ref{sec:interaction} deals with the interaction of DDS with propagation
and search heuristics. Section~\ref{sec:global} discusses how global
constraints interact with DDS, focusing on decomposition strategies for some
important representatives.

On a lower level of abstraction, Sec.~\ref{sec:impl} sketches the concrete
implementation of DDS using the Gecode \CPP\ constraint programming library.
With the help of our Gecode implementation, we study the practical impact of
DDS in Sec.~\ref{sec:res} by counting solutions for random instances of two
important CSPs, graph coloring and protein structure prediction. Both examples
are hard counting problems (in class \#P). The study shows high average
speedups and reductions in search tree size, even using our prototype
implementation. These two sections therefore provide evidence that DDS can be
integrated into a modern constraint programming system in a straightforward 
and efficient way.
The paper finishes with a summary and an outlook on future work in 
Sec.~\ref{sec:disc}.

\section{Preliminaries}
\label{sec:defs}

This section defines the central notions that we want to use to talk about 
constraint satisfaction problems.

A \emph{Constraint Satisfaction Problem (CSP)} is a triple $P = (\X,\D,\C)$,
where $\X=\{x_1,\ldots,x_m\}$ is a finite set of variables, $\D$ a function of
variables to their associated value domains, and $\C$ a set of constraints. An
$n$-ary \emph{constraint $c\in \C$} is defined by the tuple of its $n$
variables $\vars(c)$ and a set of $n$-tuples of the allowed value
combinations.
We feel free to interpret $\vars(c)$ as the set of variables of $c$. A domain
$\D$ \emph{entails} a constraint $c$ if and only if all possible value combinations of
the variable domains of $\vars(c)$ in $\D$ are allowed tuples for $c$. A
\emph{solution of a CSP} is an assignment of one value $v\in \D(x_i)$ to each
variable $x_i\in\X$ such that all $c\in\C$ are entailed. The \emph{set of
solutions of a CSP $P$} is denoted by $\sol(P)$.

Based on these definitions some important properties of a CSP can be defined.
A \emph{CSP $P=(\X,\D,\C)$ is solved} if and only if $\forall x\in\X: \sizeof{\D(x)}=1$
and $\sol(P) \neq \emptyset$. \emph{P is failed} if and only if $\sol(P)=\emptyset$ and
\emph{satisfiable} otherwise. A CSP $P'$ is \emph{stronger than} $P$
($P'\sqsubseteq P$) if and only if $\sol(P')=\sol(P)$ and $\forall x\in\X':
\D'(x)\subseteq\D(x)$.

The \emph{constraint graph of a CSP $P=(\X,\D,\C)$} is a hypergraph 
$G_P=(V,E)$, where $V=\X$ and $E=\{ \vars(c) \mid c\in\C \}$.

\section{Decomposition During Search}
\label{sec:dds}

In this section, we recapitulate \ANDOR{} tree search.
Then, we present a high-level model of how to integrate 
\ANDOR{} tree search into a propagation-based constraint solver. We call 
this integration \emph{decomposition during search} (DDS).

\subsection{\ANDOR{} tree search}
Let us look at an example to get an intuition for \ANDOR{} search. Assume
$P=(\X,\D,\C)$ with $\X=\{ A,B,C,D \}$, $\D(A)=\{3,5\}$, $\D(B)=\{3,4\}$,
$\D(C)=\D(D)=\{1,2\}$, and $\C =$ `$A,B,C,D$ are pairwise different'.
Figure~\ref{fig:ST:CDFS} presents a corresponding search tree for a plain
depth-first tree search. Each node is a propagated sub-problem of $P$ and is
visualized as a constraint graph. As usual, a node is equivalent to the
\emph{disjunction} of all its children.

\begin{figure*}[t]
  \begin{center}
	  \includegraphics[width=0.7\textwidth]{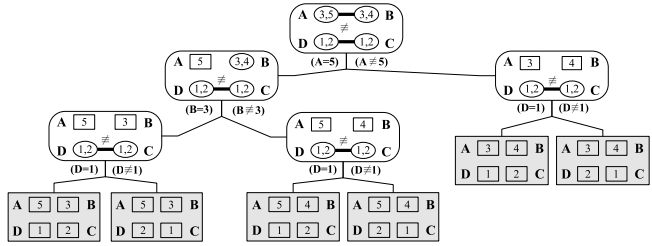}
	  \caption{DFS search tree.}
	  \label{fig:ST:CDFS}
  \end{center}
\end{figure*}

Even this tiny example demonstrates that plain DFS may perform redundant work:
The partial problem on the variables $C$ and $D$ is solved redundantly for
each solution of the partial problem on $A$ and $B$. We say that $\{A,B\}$ and
$\{C,D\}$ are \emph{independent} sets of variables.

The central idea of \ANDOR{} tree search~\cite{Dechter:Mateescu:impact:CP2004} is
to detect independent partial problems \emph{during search}, to enumerate partial
solutions of the partial problems independently, and finally to combine them to
solutions, or to compute the number of solutions. That way, each independent
partial problem is searched only once. The name \ANDOR{} search stresses that the
search tree contains both disjunctive, choice nodes (OR) and \emph{conjunctive
nodes} (AND), representing decompositions. Figure~\ref{fig:ST:CDDS} shows a
search tree for the same CSP as in Figure~\ref{fig:ST:CDFS}, but using \ANDOR{}
search. For now, you can read the big $X$ as ``combine''. Here, the search tree
contains only one decomposition and two choices, instead of five choices in
Figure~\ref{fig:ST:CDFS}. In general, the amount of redundant work can be
exponential in the size of the CSP.

\begin{figure*}[t]
\begin{center}
    \includegraphics[width=0.6\textwidth]{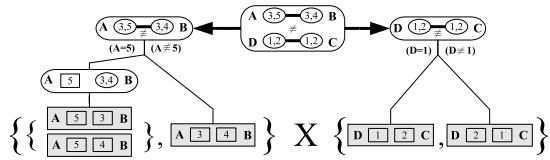}
    \caption{\emph{And/or} search tree.}
    \label{fig:ST:CDDS}
\end{center}
\end{figure*}

\subsection{Integrating \ANDOR{} search into a pro\-pa\-ga\-tion-based solver}
\label{sec:algo}

From a bird's eye view, \ANDOR{} search can be done easily in the context
of propagation-based constraint solving. Algorithm~\ref{algo:CDDS} counts all
solutions of a CSP~$P$, decomposing the problem where possible.

Ignoring line~5 for a moment, the algorithm runs a standard depth-first search
(DFS). The function $\textsc{Propagate}$ in line~2 performs constraint
propagation: it maps a CSP $P=(\X,\D,\C)$ to a CSP $P'=(\X,\D',\C')$ such that
$P'\sqsubseteq P$. $\textsc{Propagate}$ may remove entailed constraints from
$\C$.
If $P'$ is failed or solved, we just return that we found no resp.\ one solution
(lines~3,4). Otherwise (line~6), we split the problem into
\textsc{LeftChoice}($P'$) and \textsc{RightChoice}($P'$). These functions 
implement the search heuristic and will be discussed in more detail in
Sec.~\ref{sec:search}. As the branches correspond to a \emph{disjunction} of
$P'$, the recursive counts \emph{add} up to the total number of solutions.

\begin{algorithm}[h]
\caption{Counting by Decomposition During Search}
\label{algo:CDDS}
\begin{scriptsize}
\begin{algorithmic}[1]
  \Function{DDS}{$P$}
  \State $P' \gets$ \Call{Propagate}{$P$}
  \State \textbf{if} {\Call{IsFailed}{$P'$}} \Return 0
  \State \textbf{if} {\Call{IsSolved}{$P'$}} \Return 1 \
  \State \textbf{if} {\Call{Decompose}{$P'$}$=(P'_1,P'_2)$} \
    \Return\
    \Call{DDS}{$P'_1$}\ $\cdot$\ 
    \Call{DDS}{$P'_2$}
  \Statex \Comment{decomposition}
  \State \Return\
    \Call{DDS}{\textsc{LeftChoice}($P'$)}\ \ +\ \
    \Call{DDS}{\textsc{RightChoice}($P'$)}
  \Statex \Comment{choice}
  \EndFunction
\end{algorithmic}
\end{scriptsize}
\end{algorithm}

The only addition that is necessary to turn DFS into an \ANDOR{} search is line~5:
If the problem can be decomposed into $P'_1$ and $P'_2$ (we simplify by assuming
only binary decomposition), these partial problems in \emph{conjunction} are
equivalent to $P'$. Hence, we \emph{multiply} the results of the recursive calls.

In contrast to investigating decomposability only on the initial CSP for a
static variable selection, Algorithm~\ref{algo:CDDS} follows a dynamic
approach: the check for decomposability is interleaved with propagation and
normal search. As search progresses, more and more variables are assigned to
values, and more and more constraints are detected to be entailed. We shall
see that this greatly increases the potential for decomposition in
Sec.~\ref{sec:interaction}. Furthermore, decomposition is completely
independent of the implementation of $\Call{LeftChoice}{}$ and
$\Call{RightChoice}{}$, so any search heuristic can be used.

\pseudosubsubsec{Short-circuit} As a straightforward optimization, we can employ
short-circuit reasoning in line~5. If $\Call{DDS}{P'_1}$ returns no solutions, we
do not have to compute $\Call{DDS}{P'_2}$ at all. Note the potential pitfall
here: There are situations where DFS detects failure easily, but DDS has to
search a huge partial problem $P'_1$ before detecting failure in $P'_2$. We come
back to this in Sec.~\ref{sec:search}.

\pseudosubsubsec{Enumerating solutions} Extending DDS to enumeration of solutions
is straightforward: We just have to return an empty list in case of failure
(line~3), a singleton list with a solution when we find one (line~4), and
interpret addition as list concatenation and multiplication as combination of
partial solutions. Instead of enumeration, we can also build up a tree-shaped
compact representation of the solution space (as in Fig.~\ref{fig:ST:CDDS} and
later in Fig.~\ref{fig:hpenergie}c).

In the rest of this section, we show how to compute the $\Call{Decompose}{}$
function efficiently.

\subsection{Computing the decomposition}

We will now define formally when a CSP can be decomposed, and give a
sufficient algorithmic characterization that leads to an efficient
implementation.

\pseudosubsubsec{Restriction and independence}
The \emph{restriction of a function $f:\Y\rightarrow \Z$ to a set
$\X\subseteq\Y$} is defined as 
\begin{equation*}
	\restrict{f}{\X}: \X\rightarrow \Z, x \mapsto f(x).
\end{equation*}

We define the restriction of a CSP $P=(\X,\D,\C)$ to a set of variables
$\X'\subseteq \X$ by $\restrict{P}{\X'} =
(\X',\restrict{\D}{\X'},\restrict{\C}{\X'})$, where 
\begin{equation*}
	\restrict{\C}{\X'} = \{ c\in\C \mid \vars(c)\subseteq\X' \}.
\end{equation*}

A non-empty proper subset $\hat\X\subset\X$ is \emph{independent} in a CSP
$P=(\X,\D,\C)$, if and only if 
\begin{equation*}
	\sol(P) = \emptyset \text{ or }\restrict{\sol(P)}{\hat\X} =
	\sol(\restrict{P}{\hat\X}).
\end{equation*}
For $\hat\X$ independent in $P$, we say that
$\restrict{P}{\hat\X}$ is a \emph{partial problem of $P$}. We can decompose $P$
if it has a partial problem.

The key to an efficient implementation of $\Call{Decompose}{}$ is to have an
algorithmic interpretation of what it means that a CSP can be decomposed into
partial problems. We now show that connected components in the constraint
graph of a CSP represent independent partial problems.

A graph is \emph{connected} if and only if there exists a path between all nodes. A
\emph{connected component} of a constraint graph $G_P$ is a maximal connected
subgraph.

\newtheorem{prop}{Proposition}
\begin{prop}
	Consider a CSP~$P = (\X,\D,\C)$ with constraint graph $G_P$. If $\XH\subset\X$
	is a connected component in $G_P$, then $\restrict{P}{\XH}$ is a partial problem of $P$.
\end{prop}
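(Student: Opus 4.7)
The plan is to unfold the definition of independence and reduce the claim to verifying two set inclusions, with the connected-component hypothesis doing the heavy lifting in one of them. Assume $\sol(P)\neq\emptyset$ (otherwise the definition is satisfied trivially) and pick, once and for all, some reference solution $t\in\sol(P)$.

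First I would establish the easy inclusion $\restrict{\sol(P)}{\XH}\subseteq\sol(\restrict{P}{\XH})$. Let $s\in\sol(P)$ and consider $\restrict{s}{\XH}$. Every constraint $c\in\restrict{\C}{\XH}$ satisfies $\vars(c)\subseteq\XH$ by definition of the restricted constraint set, so the tuple that $\restrict{s}{\XH}$ assigns to $\vars(c)$ coincides with the tuple $s$ assigns to $\vars(c)$, which is allowed since $s$ solves $P$. Therefore $\restrict{s}{\XH}$ is an assignment of the variables of $\restrict{P}{\XH}$ satisfying all its constraints.

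The harder inclusion $\sol(\restrict{P}{\XH})\subseteq\restrict{\sol(P)}{\XH}$ is where the connected-component hypothesis is used. Given $s\in\sol(\restrict{P}{\XH})$, I would stitch together a full solution by defining
\begin{equation*}
s'(x) = \begin{cases} s(x) & \text{if } x\in\XH,\\ t(x) & \text{if } x\in\X\setminus\XH,\end{cases}
\end{equation*}
and then check that $s'\in\sol(P)$ with $\restrict{s'}{\XH}=s$. The verification splits constraints $c\in\C$ by where their variables sit. If $\vars(c)\subseteq\XH$, then $c\in\restrict{\C}{\XH}$ and $s'$ agrees with $s$ on $\vars(c)$, so $c$ is satisfied. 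If $\vars(c)\subseteq\X\setminus\XH$, then $s'$ agrees with $t$ on $\vars(c)$, so $c$ is satisfied since $t\in\sol(P)$.

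The main obstacle, and the only place where connectedness enters, is ruling out the remaining case: a constraint $c$ with $\vars(c)\cap\XH\neq\emptyset$ and $\vars(c)\not\subseteq\XH$. Such a $c$ would give a hyperedge $\vars(c)\in E(G_P)$ containing a node in $\XH$ and a node outside, contradicting the maximality of $\XH$ as a connected component (the component containing that inside node would extend through $\vars(c)$ to the outside node). Hence this case is impossible, $s'$ satisfies all of $\C$, and therefore $s=\restrict{s'}{\XH}\in\restrict{\sol(P)}{\XH}$, which completes the proof.
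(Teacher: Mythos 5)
Your proof is correct and follows essentially the same route as the paper's: handle the unsatisfiable case trivially, then prove the two inclusions, with the nontrivial direction established by merging a partial solution of $\restrict{P}{\XH}$ with a reference solution of $P$ and using maximality of the connected component to rule out constraints straddling $\XH$ and its complement. Your write-up is somewhat more explicit about exactly where connectedness is invoked, but the argument is the same.
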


\begin{proof}
There exists no hyperedge between
node $x\in\XH$ and node $y\notin\XH$, as connected components
are maximal. This means that there is no constraint between any $x$ and $y$
in $P$. We have to distinguish two cases: If $P$ is unsatisfiable, $\XH$ is
trivially independent (by definition of independence). Otherwise, take an
arbitrary solution $\hat s\in\sol(\restrict{P}{\XH})$, and an arbitrary
solution $s\in\sol(P)$. Merging $\hat s$ into $s$ yields $s' = (x\mapsto
\hat s(x)$ for $x\in\XH$, $x\mapsto s(x)$ otherwise). This $s'$ is again a
solution of $P$, as all constraints on $\X\setminus\XH$ are still satisfied,
and all constraints on $\XH$ are satisfied, too. As we picked $s$ and $\hat s$
arbitrarily, we get $\restrict{\sol(P)}{\XH} \supseteq
\sol(\restrict{P}{\XH})$. Because $\restrict{C}{\XH}$ of $\restrict{P}{\XH}$
covers all constraints of $\C$ restricting $\XH$, it follows
$\restrict{\sol(P)}{\XH} \subseteq \sol(\restrict{P}{\XH})$.\\ 
\noindent Therefore, it holds $\restrict{\sol(P)}{\XH} =
\sol(\restrict{P}{\XH})$.
\end{proof}

This result is not
new~\cite{Bayardo:_count_model_connec_compon:2000,Freuder:Quinn:IJCAI1985},
but we repeat it to illustrate the central algorithmic idea. Connected
components can be computed in linear time in the size of the graph, and
incremental algorithms are available. We can thus implement
$\Call{Decompose}{}$ as a simple connectedness algorithm on the constraint
graph that yields all partial problems of the current CSP.

Finding more than one non-empty connected component is a sufficient condition for
finding partial problems, but not a necessary one. As an example, consider the
CSP that contains the trivial constraint allowing all combinations of values for
$x$ and $y$. Then $x$ and $y$ may still be independent, but the constraint graph
shows a hyperedge connecting the two variables, so that $x$ and $y$ will always
end up in the same connected component. In the following section, we will see how
propagation-based solvers can deal with this.

\section{How DDS Interacts With Propagation and Search}
\label{sec:interaction}

The previous section showed how DDS can be integrated into a propagation-based
solver. But what are the consequences, how is decomposition affected by
propagation and search, and how can it benefit from the search heuristic?

\subsection{Constraint graph dynamics}

Decomposition examines the constraint graph \emph{during search}. This is
vital as propagation and search \emph{modify} the constraint graph
dynamically -- they narrow the domains of the problem's variables and remove
some entailed constraints. The result is a sparser constraint graph with more
potential for decomposition:

\pseudosubsubsec{Assignment} Clearly, an assigned variable ($|\D(x)|=1$) is
independent of all other variables of the CSP. This implies that connections of
hyperedges to assigned variables can be removed from the constraint graph --
the constraint graph becomes sparser. Assignment increases the potential for
decomposition, since an assigned variable may have been responsible for keeping
two otherwise independent parts of the graph connected.

\pseudosubsubsec{Entailment} 
Consider the example CSP $P=(\X,\D,\C)$ from the end of the previous section, where
variables $x\in\X$ and $y\in\X$ are connected by a trivial constraint~$c\in\C$
allowing all possible value tuples. Obviously, $c$~is entailed in~$P$, it will
not contribute to propagation any more. Formally, we have for
$P'=(\X,\D,\C\setminus\{c\})$ that $\sol(P')=\sol(P)$. It is also obvious that
the constraint graph for $P'$ is sparser than the one for $P$, it contains one
edge less. It is thus vital to our approach to detect entailment of constraints
as early as possible, and to remove entailed constraints from~$\C$. Clearly, full
entailment detection is coNP-complete. Most CP systems (e.g.\ Gecode) however
implement a weak form of entailment detection in order to remove propagators
early, which our approach automatically benefits from.

\subsection{Search heuristics}
\label{sec:search}

The applied search heuristic, encoded by \Call{LeftChoice}{} and
\Call{RightChoice}{}, is extremely important for the efficiency of the search.
In particular, dynamic heuristics, natively supported by DDS, are known to be
largely superior to static ones.

In the following (and for our implementation) we refer to the common
\emph{variable-value heuristics} that select a variable~$x$ and a value
$v\in\D(x)$. Afterwards, the disjunction is done by $\Call{LeftChoice}{P} =
(\X,\D,\C\cup(x\diamond v))$ and $\Call{RightChoice}{P} =
(\X,\D,\C\cup\lnot(x\diamond v))$, where $\diamond$ is some binary relation. 

\pseudosubsubsec{Variable selection}
The variable selection strategy of such a heuristic is crucial for the size of
the search tree and often problem specific. Nevertheless, a common method for
variable selection is `first-fail', which selects by minimal domain size. Other
strategies use the degree in the constraint graph or the minimal/maximal/median
value in the domain. Static variable orderings are in many cases inferior to
these dynamic strategies. To gain the best search performance by DDS, the
variable selection further has to induce constraint graph decompositions as early
as possible.

\pseudosubsubsec{Heuristics that maximize decomposition} In order to maximize the
number of decompositions during search, the heuristic can be guided by the
constraint graph. Such a heuristic may compute e.g.\ cut-points, bridges, or more
powerful (minimal) cut-sets/separators~\cite{MarinescuD06_ECAI06}. Our framework
is well prepared to accommodate such complex strategies, in particular because
our method already builds on access to the constraint graph. 

An open problem is the possible contradiction of heuristics aimed at decomposition versus fine-tuned problem specific heuristics. The heuristic aimed at decomposition may
yield many partial problems that are not satisfiable and therefore lead to an
inefficient search. On the other hand, the problem specific heuristic might yield
no or only a few decompositions, which makes DDS unprofitable. Therefore, no
general rule can be given. But our experiments suggest that a hybrid heuristic of
selecting the variable with the highest node degree, and using the problem-specific
heuristic as a tie breaker, yields good results (see Sec.~\ref{sec:res}).

\pseudosubsubsec{Order of exploration of partial problems} 
It is of high importance in which order the children of \emph{and}-nodes, the partial problems, are explored. After detecting inconsistency in one partial problem, the exploration
of the remaining partial problems is needless. Given an unsatisfiable problem, a
good variable selection heuristic yields a failure during search as soon as
possible to avoid unnecessary search (this is called the \emph{fail-first}
principle). Assuming that we already have such a good variable selection
heuristic, we use it to guide the partial problem ordering of DDS: (1) apply the
selection to the whole (non-decomposed) variable set and (2) choose the partial
problem first that contains the selected variable. That way, a good heuristic
will lead to failure in the first explored partial problem if the decomposed
problem has no solution. Further, it decreases the probability that the remaining
partial problems are not satisfiable (in case the first is). This mitigates the
effect that failure may be detected late using DDS, mentioned in
Subsec.~\ref{sec:algo}.

\section{Global Constraints}
\label{sec:global}

One of the key features of modern constraint solvers is the use of global
constraints to strengthen propagation. Therefore, a search algorithm has to
support global constraints in order to be practically useful in such systems.
We describe the problems global constraints pose for DDS, and how to tackle
them.

For an $n$-ary (global) constraint, the constraint graph contains a hyperedge
that connects all $n$ variable nodes. Consider a CSP for
$\textsf{All-different}(w,x,y,z)$ with $w,x\in\{0,1\}$, $y,z\in\{2,3\}$. In this
model, we cannot decompose, although the binary constraints $a\neq b$ for
$a\in\{w,x\}$, $b\in\{y,z\}$ are entailed! Thus, in the current set-up, the
global \textsf{All-different} unnecessarily prevents decomposition.

The solution to this problem is to take the \emph{internal} structure of
global constraints into account: the global constraint itself can be
decomposed into smaller constraints (on fewer variables). We will call this
the \emph{constraint decomposition} to distinguish it from \emph{constraint
graph decomposition}.

If we reflect the constraint decomposition of global constraints in the
constraint graph, we recover all the decompositions that were prevented by the
global constraints before.

For our example involving the \textsf{All-different} constraint, the
constraint graph would contain two constraints, $w\neq x$ and $y\neq z$,
instead of the one global \textsf{All-different}. The graph can now be decomposed.

As global constraints often cover a significant portion of the variables in a
problem, global constraint decomposition is an essential prerequisite to make a
constraint graph decomposition by DDS possible. Note that global constraint
decomposition is independent of the other constraints present in the constraint
graph; typical permutation problems, for instance, feature one global
\textsf{All-different} that forces all variables to form a permutation, and then
several other constraints that determine the concrete properties of the
permutation. Applying DDS to such a problem is useless unless the constraint
decomposition of the \textsf{All-different} is considered when computing
connected components.

In general, a global constraint can be decomposed if and only if its
\emph{extension} (the set of allowed tuples) can be represented as a non-trivial
product, i.e.\ a product of non-singleton sets. For the above example, the set of
allowed tuples is $\{(0,1),(1,0)\}\times\{(2,3),(3,2)\}$. If a constraint can be
represented as such a non-trivial product $a\times b$, we can decompose it into
two independent constraints, one with the tuples of $a$, the other with the
tuples of $b$.

Formally, defined in terms of CSP solutions, we say a constraint $c\in\C$ of a
CSP $P=(\X,\D,\C)$ is decomposable into the constraints $c_1,\ldots,c_k$, if and only if
\begin{eqnarray*}
	vars(c) & = & \bigcup_{i\in[1,k]} vars(c_i) \quad\text{and} \\
	\forall_{i\not= j \in [1,k]} & : & vars(c_i) \cap vars(c_j) = \emptyset 
	 \quad\text{and} \\
	sol(P) & = & sol(P')
\end{eqnarray*}
with $P'=(\X,\D,\C')$ and $\C'=\{\;c_1,\ldots,c_k\}\cup\C\setminus\{\;c\}$.

\subsection{Non-decomposable constraints}
Some global constraints are never decomposable during a constraint search, since
they cannot be decomposed for any arbitrary domain $\D$.

For example, the tuples that satisfy a linear constraint, such as a linear
equation or inequation, can never be represented as a non-trivial product. The
reason for this is that each variable in a linear constraint \emph{functionally
depends} on all other variables: for any two variables $x_i$ and $x_j$ in the
constraint $\sum_{i=1}^n x_i=c$, picking a value for $x_i$ determines $x_j$,
when all other variables are assigned. Therefore, we cannot arbitrarily pick values
from the domains $\D(x_i)$ and $\D(x_j)$ such that the constraint is satisfied.

Consequently, subgraphs covered by linear constraints stay connected until all
its variables are assigned. When solving a problem where subsets of $\X$ are
covered by non-decomposable constraints, we can guide the search heuristic
for a decomposition into these subsets. On the other hand, in problems where
one non-decomposable constraint covers the whole variable set, it is obvious
that search cannot profit from DDS.

\subsection{Decomposable constraints}

To take full advantage of DDS, efficient algorithms for the detection of possible
constraint decompositions are necessary. Good candidates for an integration are
the propagation methods that already investigate the variable domains.
Unfortunately, these propagators are highly constraint specific and thus no single
general detection procedure for all decomposable constraints can be given. But
there are global constraints whose propagation algorithms and data structures
either directly yield the decomposition as a by-product, or that can be easily
extended for detection.

In the following, we discuss three important propagators that can detect and
compute a decomposition efficiently.

\pseudosubsubsec{All-different} In contrast to linear constraints, there is no
functional dependency between variables for \textsf{All-different}, but the exact
inverse -- variables depend on the \emph{absence} of values in a domain. This
yields a maximal potential for decomposition. R\'{e}gin's propagator for
\textsf{All-different}~\cite{Regin:94} employs a \emph{variable-value} graph,
connecting each variable node with the value nodes corresponding to the current
domain. We can observe that \textsf{All-different} can be decomposed if and only
if the variable-value graph contains more than one connected component. In
Fig.~\ref{fig:alldiff} the variable-value graph for the simple initial example is
shown. As connected components of this graph have to be computed anyway during
the propagation, we can get the constraint decomposition without any additional
overhead. This technique generalizes to the \textbf{global cardinality
constraint}.

\begin{figure}[htb]
\begin{center}
	\includegraphics[width=8em]{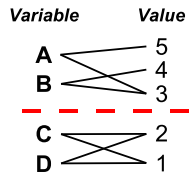}
\end{center}
\caption{Decomposable variable-value graph of an \textsf{All-different}
constraint.}
\label{fig:alldiff}
\end{figure}

\pseudosubsubsec{Slide} Introduced by Bessiere et al.~\cite{BessiereSlide},
\textsf{Slide} slides a $k$-ary constraint $c$ over a sequence of variables,
i.e.\ it holds if $c(x_i,\dots,x_{i+k-1})$ holds for all $1\leq i\leq n-k+1$.
\textsf{Slide} can be split into two at variable $x_d$ if the individual
constraints involving $x_d$ are entailed (see Fig.~\ref{fig:slide}). Entailment
happens at the latest when all variables between $x_{d-k+1}$ and $x_{d+k-1}$ are
assigned. Depending on how soon the individual $c$ are entailed, we can decompose
even earlier. \textsf{Slide} establishes a dependency of a \emph{fixed width}
between variables, so that once that width is reached, the constraint can be
decomposed. Note, this constraint decomposition is not complete and only detects
certain non-trivial products along the variable ordering. Computing the full
constraint decomposition would require insight into the structure of the
constraint $c$.

\begin{figure}[htb]
\begin{center}
    \includegraphics[width=10em]{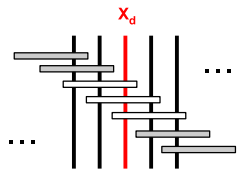}
\end{center}
\caption{Decomposing \textsf{Slide} at variable $x_d$.}
\label{fig:slide}
\end{figure}

\pseudosubsubsec{Regular}
Pesant's regular language membership constraint \cite{PesantRegular} states
that the values taken by a sequence of variables $x_1,\dots,x_n$ belong to a regular
language~$L$. It is essentially a constraint represented in extension, as
arbitrary tuples can be encoded into regular expressions. Propagation works on an
unfolding of a finite automaton accepting $L$,
called the \emph{layered graph} (see Fig.~\ref{fig:regular}).

If now, at some point during propagation, one layer is left with a single state
(see Fig.~\ref{fig:regular}), the graph can be split into two halves, making the
singleton state a new final state (for the left half) and start state (for the
right half). They correspond to regular expressions $R_l$ and $R_r$, covering the
two substrings left and right of that layer, such that the language generated by
$R_lR_r$ is a sublanguage of $L$ that contains exactly those strings still
licensed by the variable domains. Note that constraint decomposition is possible
even without the variables $x_i$ being assigned, but that it heavily depends on
the actual automaton. Again, as for \textsf{Slide}, this only detects those
non-trivial products that are compatible with the variable ordering of the
regular constraint. Determining the full decomposition for \textsf{Regular} would
amount to finding a non-trivial product representation of its allowed tuples,
which cannot be computed efficiently.

\begin{figure}[htb]
\begin{center}
   \includegraphics[width=12em]{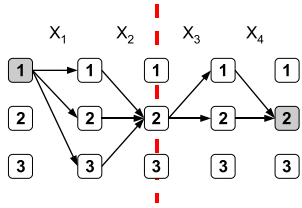}
\end{center}
\caption{Layered graph for a \textsf{Regular} constraint.}
\label{fig:regular}
\end{figure}

\section{Implementation}
\label{sec:impl}
Our implementation of DDS extends Gecode, a \CPP\ constraint programming library.
In this section, we give an overview of relevant technical details of Gecode, and
discuss the four main additions to Gecode that enable DDS: access to the
constraint graph, decomposing global constraints, integrating
$\Call{Decompose}{}$ into the search heuristic, and specialized search engines.
The additions to Gecode comprise only 2500~lines (5\%) of \CPP code and enable
the use of DDS in any CSP modeled in Gecode. DDS will be available as part of the
next release of Gecode.

\subsection{Gecode}
The Gecode library~\cite{GecodeWebsite} is an open source constraint solver 
implemented in \CPP. It lends itself to a prototype implementation of DDS 
because of four facts:
\begin{enumerate}
  \item Full source code enables changes to the available 
propagators.
  \item The reflection capabilities allow access to the constraint 
graph.
  \item Search is based on recomputation and copying, which significantly 
eases the implementation of specialized branchings and search engines.
  \item It 
provides good performance, so that benchmarks give meaningful results.
\end{enumerate}

\subsection{Constraint graph}
In most CP systems, the constraint graph is implicit in the data structures
for variables and propagators. Gecode, e.g., maintains a list of propagators,
and each propagator has access to the variables it depends on.

For DDS, a more explicit representation is needed that supports the
computation of connected components. We can thus either maintain an
additional, explicit constraint graph during propagation and search, or
extract the graph from the implicit information each time we need it. For the
prototype implementation, we chose the latter approach. We make use of
Gecode's reflection API, which allows to iterate over all propagators and
their variables. Through reflection, we construct a graph using data
structures from the boost graph library~\cite{boostlib}, which also provides
the algorithm that computes connected components.

Assigned variables are independent of all other variables as discussed in
Sec.~\ref{sec:interaction}. Therefore, they are reported as individual partial
problems (connected components) but are ignored to avoid useless trivial
decompositions without any effect. Instead these already solved single variable
CSPs are added to an arbitrary partial problem that covers at least one
unassigned variable. If the final number of such ``non-solved'' partial problems
is at least two, a problem decomposition is initialised. This significantly
speeds up the search process because only profitable decompositions are done.

\subsection{Global constraint decomposition}
As discussed in Sec.~\ref{sec:global}, it is absolutely essential for the success
of DDS to consider constraint decompositions of global constraints when computing
the connected components.

There are two possible implementation strategies for decomposing global
constraints. A propagator can either detect decomposability during propagation
and replace itself with several propagators on subsets of the variables. Or,
alternatively, the constraint decompositions are only computed on demand when the
constraint graph is required for connected component analysis. We implemented the
latter option.

This again leaves two possible implementations. When the constraint graph is
decomposed, one propagator (for a global constraint) may belong to two connected
components. When search continues in the individual components, we can either use
the propagator as it is in both components, or replace it by its decomposition.
The latter option has the advantage that the smaller propagator may be more
efficient (as it can ignore the variables outside its connected component).
However, for simplicity, we implemented the former.

\subsection{Decomposing branchings}
Once we have identified connected components in the constraint graph, we have
to create the partial problems that correspond to these components. In Gecode,
we exploit the duality of choice and decomposition: both add branches to the
search tree. The following observation leads to a simple and efficient
implementation. If the heuristic is restricted such that it only selects
variables inside one connected component, also propagation will only occur for
variables of that component: For $\hat\X$ independent in $P$,
$\restrict{\propagate(P)}{\hat\X}=\propagate(\restrict{P}{\hat\X})$.

For our Gecode implementation, $\Call{Decompose}{}$ is thus realized as a
\emph{branching}. A branching in Gecode usually implements
\Call{LeftChoice}{}/\Call{RightChoice}{}. For DDS, we extend it to also
implement \Call{Decompose}{}: If decomposition is possible, the branching
limits further search to the variables in one connected component per branch.
Otherwise, it just creates the usual choices according to the heuristic.

Branchings in Gecode are fully programmable. They have to support two
operations\footnote{In fact, branchings in Gecode have a slightly more
complex interface, which we deviate from to simplify presentation.}:
\Call{description}{} and \Call{commit}{}. \Call{description}{} returns an
abstract description of the possible branches while \Call{commit}{} executes
the branching according to a given description and alternative number.

A decomposing branching in Gecode is a wrapper around a standard
variable-value branching. The actual work is done by \Call{description}{}: it
requests the constraint graph and performs the connected component analysis.
If decomposition is possible, a special description is returned, representing
the independent subsets $\hat{\X_i}\subset \X$. Otherwise,
\Call{description}{} is delegated to the embedded variable-value branching.
Note that Gecode supports $n$-ary branchings, so decompositions do not 
have to be binary (as presented so far).

When \Call{commit}{} is invoked with a variable-value description, the call is
again delegated to the embedded branching. For a decomposition description,
the branching's list of variables is updated to $\hat{\X_i}$ for branch~$i$,
those still active in the selected component.

\subsection{Decomposition search engines}
As decomposition is performed by the branching, the search engines have to be
specialized accordingly.
We developed four search engines for DDS. A counting
search engine computes the number of solutions of a given problem. A
general-purpose search engine allows to incrementally search the whole tree
and access all the partial solutions. Based on that we provide a search engine
that enumerates all full solutions. A graphical search engine based on
Gecode's \emph{Gist} (graphical interactive search tool) displays the search
tree with special decomposition nodes, and allows to get an overview of where
and how a particular problem can be decomposed. Figure~\ref{fig:gist} shows a
screen shot of a partial search using DDS. Circular nodes with inner squares
represent decompositions. All search engines accept cut-off parameters for the
number of (full, not partial!) solutions to be explored.

\begin{figure}[hbt]
  \centering
  \includegraphics[width=4cm]{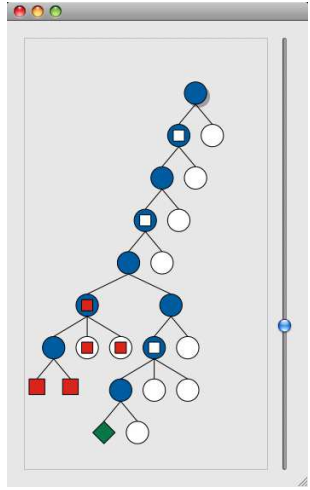}\caption{Gist, with decomposition nodes}\label{fig:gist}
\end{figure}

\section{Applications and Empirical Results}
\label{sec:res}
To illustrate possible use cases of DDS we applied it to two counting problems
with global constraints. At first, the widely known graph coloring problem
allows for a good and scalable illustration of the DDS effects due to the
coherence of problem and constraint graph structure. It serves therefore as a
model to investigate the impact of the problem structure on DDS in the
presence of global constraints. Afterwards, we study the benefit of DDS on
the real world problem of optimal protein structure prediction. This problem can
be modelled using constraint programming~\cite{Backofen_Will_Constraints2006}
but necessitates the presence of global constraints covering the whole problem.
Thus, the discussed constraint decomposition is an essential prerequisite to
enable DDS.

Both applications show tremendous reductions in runtime and search 
tree size.

The applications were realized using our DDS implementation in Gecode. Only
the search strategy was changed (DFS to DDS) -- modeling, variable and value
selection were kept the same for an appropriate comparison of the results. We
chose maximal degree with minimal domain size as tie breaker as dynamic
variable selection, which enforces decomposition and works well for DFS, too.

\subsection{Graph coloring}

Graph coloring is an important and hard problem with applications in scheduling,
assignment of radio frequencies, and computer
optimization~\cite{narayanan01static,GC_frequency,GC_scheduling}. A proper
coloring assigns different colors to adjacent nodes. We want the chromatic
polynomial for the chromatic number, i.e.\ the number of graph colorings with
minimal colors. Graph coloring is a useful benchmark, because it gives us a
scalable problem, so that we can apply DDS to instances of varying complexity.

\pseudosubsubsec{The constraint model}
For a given undirected graph $g$ and a number of colors $c$ we introduce one
variable per node with the initial domains $0..(c-1)$. For each maximal clique
of size $> 2$, we post an \textsf{All-different} constraint on the corresponding
variables. This maximizes the propagation necessary to solve these problems
but still enables DDS as we discuss below. For all remaining edges we add
binary inequality constraints.

\pseudosubsubsec{The test sets}
We generated the two test sets GC-30 and GC-50 of graphs with 30~and 50~nodes.
For each size, random graphs were obtained by inserting an edge of the
complete graph with a fixed uniform edge probability $P^e$. This was done
using the Erd\H{o}s-R\'enyi random graph generator GTgraph~\cite{gtGraph}. For
each edge probability $P^e$ from 16~to 40~percent, 2000~graphs were generated
and their colorings counted via DFS and DDS. To test highly degenerated
problems (with many solutions) as well, we stopped after 1~million solutions.

\pseudosubsubsec{Results}
For the test sets, Tab.~\ref{tab:res:gc} compares the time consumption and 
search tree size by average ratios of DFS and DDS ($\frac{DFS}{DDS}$). A 
figure of 100 thus means that DDS is 100 times faster than DFS, or that the 
DFS search tree has 100 times as many nodes as the one for DDS. A dash means 
that most of the problems were not solved within a given time-out.

\begin{table}[hbt]
\centering
\footnotesize
        \begin{tabular}{ll|c|c|c|c}
         {\scriptsize DFS/DDS}: & Test set & 16 \% & 18 \% & 20 \% & 22 \%  \\ 

         \hline
         \multirow{2}{1.2cm}{rel. RT:} & GC-30 & 411.2 & 197.7 & 75.74 &
         34.6 \\ 
         & GC-50 & 242.7 & 151.8 & 34.23 & 16.5  \\ 

         \hline
         \multirow{2}{1.2cm}{ST size:} & GC-30 & 680.3 & 344.4 & 142.0 & 74.48
          \\ 
         & GC-50 & 646.1 & 383.8 & 94.28 & 47.26 \\
         \\
         \\
         {\scriptsize DFS/DDS}: & Test set  & 24 \% & 28
         \% & 32 \% & 40 \% \\ 

         \hline
         \multirow{2}{1.2cm}{rel. RT:} & GC-30 & 23.1 & 11.9 & 3.85 &
         2.14 \\
         & GC-50   & 18.2 & 3.4 & 2.71  & --  \\

         \hline
         \multirow{2}{1.2cm}{ST size:} & GC-30  & 62.27 & 33.96 & 10.90  & 4.97  \\ 
         & GC-50 & 41.69  & 11.6 & 9.28 & -- \\
         \\
        \end{tabular}
\caption{Average ratios of DFS vs. DDS for various edge probabilities (RT =
runtime, ST = search tree)}
\label{tab:res:gc}
\end{table}

The presented runtime ratios show the high speedup for graphs with edge 
probabilities $P^e \leq 40$\%. The distribution of speedup is exemplified in 
Fig.~\ref{fig:gc-hist}. The speedup corresponds to an even larger reduction of 
the search tree for DDS, which was only increased for~0.5\% of all problems. 
Furthermore, sparse graphs yield a much higher runtime improvement than dense 
graphs, visualized by Fig.~\ref{fig:speedup}. The number of fails and 
propagations show no significant effect of DDS in contrast to runtime or search
tree size.

\begin{figure}[hbt]
\begin{center}
    \includegraphics[width=0.4\textwidth]{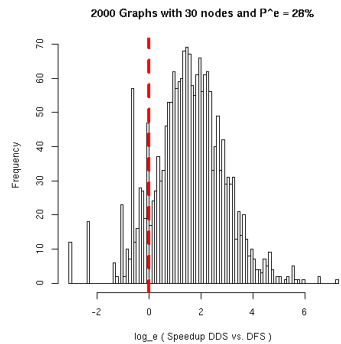}
\caption{
Histogram of logarithmic speedup for $P^e = 28$ and~30 nodes (the dashed line
marks equal runtime).}
\label{fig:gc-hist}
\end{center}
\end{figure}

\begin{figure}[hbt]
\begin{center}
    \includegraphics[width=0.4\textwidth]{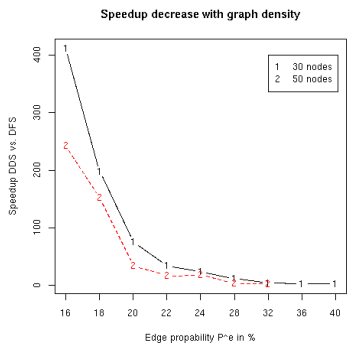}
\caption{Avg. speedup decrease from~400 to~2 by graph density.}
\label{fig:speedup}
\end{center}
\end{figure}

Still, the search tree reduction is not completely reflected in runtime 
speedup, which illustrates the computational overhead of DDS in the current 
prototypic implementation. Anyway, our data shows that DDS is well suited to 
improve solution counting even for dense graphs with $P^e$ about~40\%. We 
expect even higher speedups and search-tree reductions if the solutions are 
counted completely, i.e.\ without the current upper bound of 1~million.
Table~\ref{tab:res:gc} suggests that the speedup decreases with increasing 
number of nodes to color in the graph. With increasing number of nodes, the 
graph \emph{as well as the constraint graph} grow quadratically. 

The speedup is significantly lower than the reduction of the search tree. In
part this can be ascribed to our implementation that rebuilds the constraint
graph in each search step. A system that provides cheaper constraint graph
access, e.g.\ by maintaining it incrementally, is expected to perform much
better.

\subsection{Optimal protein structure prediction}

The prediction of optimal (minimal energy) structures of simplified lattice 
proteins is a hard (NP-complete) problem in bioinformatics. Here we focus on 
the HP-model introduced 
by Lau and Dill~\cite{Lau:_lattic_statis_mechan_model_confor:ACS1989}. In this
model, a protein chain is reduced to a sequence of monomers of equal size, whereby the 
20~aminoacids are grouped into hydrophobic~(H) or polar~(P). A structure is a 
self-avoiding walk of the underlying lattice (e.g.\ square or cubic). A contact 
energy function is used to determine the energy of a structure. The energy 
table and an example is given in Fig.~\ref{fig:hpenergie}.
The problem is to predict minimal energy structures for a given HP-sequence.

The number and quality of optimal structures has applications in the study of 
energy landscape properties, protein evolution and 
kinetics~\cite{flamm02:_barrier_trees_degen_lands,fitnesslandscapes,Wolfinger:etal:_explor:EPL2006}.

\begin{figure}[hbt]
\centering
\begin{tabular}{ll}
     a)
    \begin{minipage}[t]{16mm}
        \begin{tabular}[b]{c|cc}
            & H & P \\
            \hline
            H & -1 & 0\\
            P & 0 & 0\\
        \end{tabular}
    \end{minipage} 
	\hspace{1em}
	&
     b)
    \begin{minipage}[t]{25mm}
        \includegraphics[width=20mm]{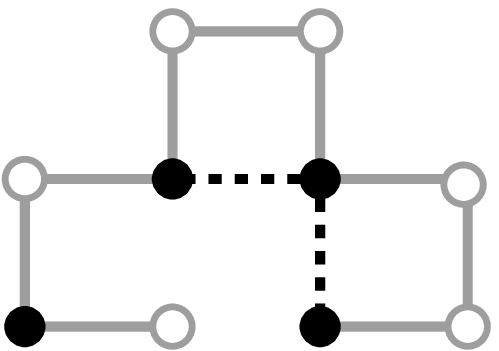}
    \end{minipage}
	\newline
	\\
	\\
	\multicolumn{2}{l}{
     c)
    \begin{minipage}[b]{45mm}        
         \includegraphics[width=45mm]{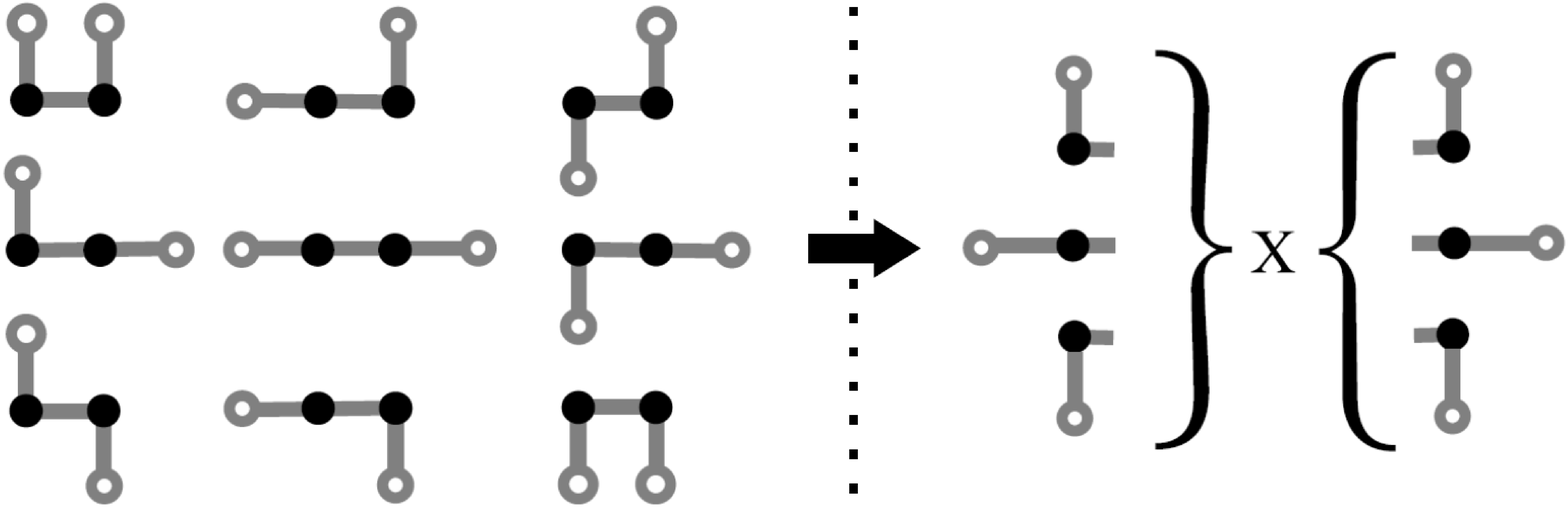}
    \end{minipage}
	}
\end{tabular}
    \caption{(a) Contact energy function (b) Example HP-structure with energy~-2 
    (H-monomer: black, P-monomer: white, structure back bone: grey, HH-contact: 
    dotted) (c) Compression of the solution space from 9 complete down to 
    3~by~3 partial structures.}
    \label{fig:hpenergie}
\end{figure}

\pseudosubsubsec{The constraint model}
In~\cite{Backofen_Will_Constraints2006}, the problem was successfully modeled
as CSP and named Constraint-based Protein Structure Prediction (CPSP). Here,
a variable is introduced for each sequence position and with lattice points as
domains\footnote{In practice, lattice positions are indexed by integers such
  that standard constraint solvers for finite domains over integers are
  applicable.}. The self-avoiding walk is modeled by a sequence of binary
neighboring constraints (ensuring the connectivity of successive monomers) and
a global \textsf{All-different} constraint for self-avoidingness. Supporting
decomposition of the \textsf{All-different} propagator, see Sec.~\ref{sec:global}, is
therefore essential for profiting from DDS.

CPSP uses a database of pre-calculated point sets, called H-cores, that
represent possible optimal distributions of H-monomers. By that, the
optimization problem is reduced to a satisfaction problem for a given H-core,
if H-variables are restricted to these positions.  For optimal H-cores, the
solutions of the CSP are optimal structures. Thus, for counting all optimal
structures, one iterates through the optimal cores.

\pseudosubsubsec{The test sets}
We generated two test sets, PS-48 and PS-64, with uniformly distributed random 
HP-sequences of length~48 and~64. For the generation we used the free available 
CPSP implementation~\cite{CPSP}. With only minimal modifications (new 
branching) we use the existing CSP model with DDS.

PS-48 contains 6350~HP-sequences and for each up to 1~million optimal 
structures in the cubic lattice were predicted. For the 2630~HP-sequences 
in PS-64 up to 2~million structures have been predicted in the cubic 
lattice, due to the increasing degeneracy in sequence length.

\pseudosubsubsec{Results}
The average ratio results are given in Tab.~\ref{tab:res:cpsp}. There, the 
enormous search tree reduction with an average factor of 11~and 25~respectively 
is shown. The reduction using DDS compared to DFS leads to much less 
propagations (3-~to 5-fold). This and the slightly less fails result in a 
runtime speedup of 3-/4-fold using the same variable selection heuristics for 
both search strategies. Here, the immense possibilities of DDS even without 
advanced constraint-graph specific heuristics are demonstrated. This also shows 
the rising advantage of DDS over DFS for increasing problem sizes (with higher
solution numbers).

\begin{table}[hbt]
\centering
{\scriptsize DFS / DDS}\\
        \begin{tabular}{r|c|c|c|c}
          &\ runtime &\ ST size &\ fails &\ propagations \\
         \hline
         PS-48   & 2.98 & 11.30 & 1.40 & 3.27 \\
         PS-64   & 4.23 & 25.33 & 1.76 & 5.43 
        \end{tabular}

\caption{Average ratios for CPSP using DFS vs. DDS (ST = search tree)}
\label{tab:res:cpsp}
\vspace{-0.5em}
\end{table}

\section{Discussion}
\label{sec:disc}
\vspace{-.5em}

The paper introduces \emph{decomposition during search} (DDS), an integration
of \ANDOR{} search with propagation-based constraint solvers. DDS
dynamically decomposes CSPs, avoiding much of the redundant work of standard
tree search when exploring huge search spaces, e.g.\ of $\#P$-hard counting
problems.

We discuss the interaction of DDS with such vital and essential features as
global constraints and dynamic variable ordering. The techniques presented
here have been implemented for Gecode.

The empirical evaluation on graph coloring and protein structure prediction
shows the huge potential of DDS in terms of search tree size reduction and
already high true runtime speedup. The speedup proves that DDS can be
implemented competitively, and with a reasonable overhead. We expect even
higher speedups by improving the constraint graph representation and its
incremental maintenance, which is a current area of development. However, one
experience from our experiments is that it is highly problem-specific whether
the constraint graph allows for decomposition. We partly explain this by
pointing out that some constraints (e.g.\ linear (in-)equations) inherently
hinder decomposition.

We envision promising future research in the following directions. First,
providing efficient access to the constraint graph. Second, the development of
specifically tailored heuristics for DDS focusing on dynamic variable
selection or domain splitting. Such heuristics should employ information about
the constraint graph, to decompose the problem as often as possible and in a
well-balanced way. Decomposition-directed heuristics might however counteract
problem specific heuristics. Balancing such heuristics is a further research
direction.

Finally, solving optimization problems using \ANDOR{} branch-and-bound (BAB)
search~\cite{Dechter:IJCAI:2005} seems an obvious extension. However, our first
experiments using a prototypical DDS extension of BAB show much smaller benefits
than for counting (similar to the results
in~\cite{Dechter:IJCAI:2005,MarinescuD06_ECAI06}).

\pseudosubsubsec{Acknowledgements} We thank Christian Schulte and
Mi\-ka\-el La\-ger\-kvist for fruitful discussions
about the architecture and the paper, and the reviewers of earlier versions
of this paper for constructive comments.
Martin Mann is supported by the EU project EMBIO (EC contract number 012835).
Sebastian Will is partially supported by the EU Network of Excellence REWERSE
(project number 506779).

\bibliographystyle{abbrv}
\bibliography{references}
\end{document}